\documentclass[letterpaper, 10 pt, conference]{ieeeconf}  

\IEEEoverridecommandlockouts  
                                                          
\usepackage[T1]{fontenc}
\usepackage{graphicx}
\usepackage{amsmath, amssymb}
\usepackage{xcolor}
\usepackage{bm}

\graphicspath{ {Figures/} }

\newlength{\mylen}
\renewcommand{\baselinestretch}{0.98}

\newtheorem{thm}{Theorem}

\DeclareMathOperator{\ubf}{\mathbf{u}}
\DeclareMathOperator{\x}{\mathbf{x}}
\DeclareMathOperator{\y}{\mathbf{y}}
\DeclareMathOperator{\yh}{\hat{\mathbf{y}}}

\DeclareMathOperator{\z}{\mathbf{z}}
\DeclareMathOperator{\vbf}{\mathbf{v}}
\DeclareMathOperator{\etabf}{\bm{\eta}} 

\DeclareMathOperator{\m}{\bm{\mu}} 
\DeclareMathOperator{\GP}{\mathcal{GP}}
\DeclareMathOperator{\No}{\mathcal{N}} 
\DeclareMathOperator{\zerobf}{\mathbf{0}} 
\DeclareMathOperator{\B}{\mathcal{B}} 
\DeclareMathOperator{\KL}{\mathcal{D}_{KL}} 

\DeclareMathOperator{\V}{\mathcal{V}} 
\DeclareMathOperator{\D}{\mathcal{D}} 
\DeclareMathOperator{\R}{\mathbb{R}} 
\DeclareMathOperator{\sk}{\textup Skew}

\DeclareMathOperator{\diag}{\textup diag}

\DeclareMathOperator{\Vbf}{V}

\DeclareMathOperator{\e}{\mathbf{e}} 
\DeclareMathOperator{\p}{\mathbf{p}} 
\DeclareMathOperator{\E}{\mathbb{E}}

\title{\LARGE \bf Stability Enhanced Gaussian Process Variational Autoencoders}

\author{Carl R. Richardson$^{*}$ \hspace{3mm} Jichen Zhang$^{*}$ \hspace{3mm} Ethan King \hspace{3mm} Ján Drgoňa
\thanks{CRR, JZ are with Department of Engineering Science, University of Oxford, Parks Road, Oxford, OX1 3PJ, UK
        {\tt\footnotesize  \{carl.richardson, jichen.zhang\}@eng.ox.ac.uk}}%
\thanks{EK is with Pacific Northwest National Laboratory, Richland, Washington, 99354, USA
        {\tt\footnotesize ethan.king@pnnl.gov}}%
\thanks{JD is with Johns Hopkins University, Baltimore, Maryland, 21218, USA
        {\tt\footnotesize jdrgona1@jh.edu}. $^*$Equal contribution.}%
\thanks{This research was supported by the AT SCALE initiative via the Laboratory Directed Research and Development (LDRD) investments at Pacific Northwest National Laboratory (PNNL). PNNL is a national laboratory operated for the U.S. Department of Energy (DOE) by Battelle Memorial Institute under Contract No. DE-AC05-76RL0-1830.}%
}

\begin{document}

\maketitle
\thispagestyle{empty}
\pagestyle{empty}

\begin{abstract}
A novel stability-enhanced Gaussian process variational autoencoder (SEGP-VAE) is proposed for indirectly training a low-dimensional linear time invariant (LTI) system, using high-dimensional video data. The mean and covariance function of the novel SEGP prior are derived from the definition of an LTI system, enabling the SEGP to capture the indirectly observed latent process using a combined probabilistic and interpretable physical model. The search space of LTI parameters is restricted to the set of semi-contracting systems via a complete and unconstrained parametrisation. As a result, the SEGP-VAE can be trained using unconstrained optimisation algorithms. Furthermore, this parametrisation prevents numerical issues caused by the presence of a non-Hurwitz state matrix. A case study applies SEGP-VAE to a dataset containing videos of spiralling particles. This highlights the benefits of the approach and the application-specific design choices that enabled accurate latent state predictions.
\end{abstract}

\section{Introduction} \label{sec:intro}

Variational autoencoders (VAEs) have proven to be very successful models for learning compact representations of high-dimensional data, in an unsupervised manner \cite{doersch2016tutorial}. They have successfully been applied to static and time-varying tasks, including image classification \cite{gregor2015draw}, image segmentation \cite{sohn2015learning},  anomaly detection \cite{waseem2022visual}, long-horizon prediction \cite{saxena2021clockwork}, mobile robots \cite{nagano2022spatio}, and data generation for autonomous driving \cite{amini2018variational}. In each of the time-varying applications, the latent process is subject to certain physical rules. Assuming knowledge of these rules is the foundation of model-driven approaches; however, these rules are often not completely known. Data-driven approaches, such as VAEs, typically assume no prior information but result in data and computationally expensive algorithms which lack interpretability. 

Koopman operator theory provides a mathematical framework for representing nonlinear dynamical systems as a linear, infinite-dimensional operator acting on observable functions of the system’s state \cite{budivsic2012applied}. This perspective enables the use of modern computational methods to approximate and analyse otherwise intractable systems \cite{budivsic2012applied, lian2020gaussian, miao2026learning}. Whilst the Koopman operator is theoretically required to be infinite-dimensional, for many practical applications, a finite-dimensional approximation is sufficient. This suggests VAEs coupled with linear latent dynamics (e.g., \cite{beckers2023physics}) are also capable of approximating nonlinear latent dynamics. 

In this article, we assume the existence of a set of videos generated as a function of a low-dimensional latent process. The goal is to learn a model of the latent process from the video data. This setup is common in material science where, for example, a thin film deposition process cannot be directly measured, but video recordings of related diffraction patterns are available \cite{kaspar2025machine}. This problem has been studied through the lens of state space models \cite{fraccaro2017disentangled, lin2018variational, pearce2018comparing}, and Gaussian process (GP) based VAEs \cite{beckers2023physics, casale2018gaussian, campbell2020tvgp, pearce2020gaussian}. Furthermore, to address the brittleness of many machine learning (ML) approaches for scientific problems, physics-informed learning has emerged. This involves verifying properties such as stability after training \cite{Richardson2023strengthened, RichardsonL4DC2024, richardson2026analysis}, building prior knowledge into the model architecture \cite{beckers2023physics, bronstein2021geometric, richardson2025lurie}, or encouraging desirable properties through the loss function design \cite{degrave2022magnetic, king2023physics}.

Encoding prior knowledge into models reduces generality but improves performance; when the encoded structure is ubiquitous across applications, this trade-off preserves broad applicability while enabling the benefits of physics-informed learning~\cite{Drgona2025}. One property common to many dynamical systems is stability, as highlighted by examples in physics \cite{khalil2002nonlinear}, biology \cite{kozachkov2020achieving} and neural dynamics \cite{mohammadineural,Drgona2022}. Semi-contraction is one definition of global stability \cite{lohmiller1998contraction} which implies the distance between two trajectories will never increase, regardless of their initial conditions. This is a robust form of stability \cite{davydov2024perspectives, jaffe2024learning} and includes exponential stability and contraction as special cases. Furthermore, it can be applied to time-varying dynamical systems, such as those with external inputs.

\textbf{Contribution}: We proposed a novel stability-enhanced Gaussian process (SEGP) where the mean and covariance functions are derived from the definition of a linear time-invariant (LTI) system. We embed stability in the form of semi-contraction in the GP through an unconstrained and complete parametrisation (i.e., any LTI semi-contracting system can be constructed from the unconstrained variables). This parametrisation inherently prevents numerical issues which could arise during training due to a non-Hurwitz $A$ matrix. The SEGP was incorporated within an existing GP-VAE framework to form the SEGP-VAE. This extends the physics-enhanced GP-VAE \cite{beckers2023physics} by requiring less prior knowledge of the latent process and accounting for unknown initial conditions. The SEGP-VAE was trained on videos of a particle spiralling in a plane. After unsupervised training, the SEGP-VAE captures the indirectly observed underlying latent process using a combined probabilistic and interpretable physical model. When conditioned upon an observed video, the SEGP posterior can accurately predict the underlying dynamics with low uncertainty. The LTI model can also be used for control design.

\textbf{Paper structure}: The paper is organised such that the theoretical contribution is separated from the application-specific design choices. The background material and problem setup are presented in Section \ref{sec:pre} and Section \ref{sub:ps}. The SEGP is detailed in Section \ref{sec:SEGP} followed by the SEGP-VAE in Section \ref{sub:GPVAE}. Finally, a case study is presented in Section \ref{sec:app}. This describes the data, application specific design choices such as encoder architecture, and presents empirical results. For other applications, these choices may differ.

\section{Preliminaries} \label{sec:pre}

\subsection{Notation} \label{sub:not}

The set of real numbers greater than or equal to zero is denoted by $\R_{\geq 0}$. The following sets of $j \times j$ matrices: symmetric (diagonal) positive definite, skew-symmetric and lower triangular with non-negative (positive) diagonal entries, are respectively denoted by  $\mathbb{S}_{+}^{j} (\mathbb{D}_{+}^{j}), \sk(j)$ and $\mathbb{L}_{\geq 0}^{j}$ ($\mathbb{L}_{+}^{j}$). The matrix $H$ with elements $h_{jq}$ is denoted by $H = [h_{jq}]$. A negative (semi-) definite matrix inequality is denoted by $H \prec 0$ ($H \preceq 0$). A vector function $\z(t) \in \R^{m}$ at a set of discrete time points, $T = \{t_{j}\}_{j=1}^{N}$, is denoted by $\z(T) = [ z_{1}(t_{1}) \hspace{1mm} \dots \hspace{1mm} z_{1}(t_{N}) \hspace{1mm} \dots \hspace{1mm} z_{m}(t_{1}) \hspace{1mm} \dots \hspace{1mm} z_{m}(t_{N})] \in \R^{mN}$. Similarly, a matrix function $H(t, t') =  [h_{jq}(t,t')] \in \R^{m \times m}$ over $T$ and $T_{*} = \{\tau_{q} \}_{q=1}^{N_{*}}$ is denoted by
\begin{equation*}
H(T,T_{*}) = \hspace{-1mm}
\setlength\arraycolsep{3pt}
\begin{bmatrix}
H_{11}(T, T_{*}) & \dots & H_{1m}(T, T_{*}) \\
\vdots & \ddots & \vdots \\
H_{m1}(T, T_{*}) & \dots & H_{mm}(T, T_{*})
\end{bmatrix}
\hspace{-1mm} \in \R^{mN \times mN_{*}}
\end{equation*}
where $H_{il}(T,T_{*}) = [h_{il}(t_{j}, \tau_{q})] \in \R^{N \times N_{*}}$.

An $m$-dimensional vector-valued GP is denoted by $\z(t) \sim \GP(\m_{z}(t), K_{z}(t,t'))$ where $\m_{z}(t) \in \R^{m}$ is the vector-valued mean function  and $K_{z}(t,t') \in \R^{m \times m}$ is the matrix-valued covariance function. The associated Gaussian distribution over $T$ is denoted by $\z(T) \sim \No \big(\m_{z}(T), K_{z}(T,T) \big)$ where $\m_{z}(T) \in \R^{mN}$ and $K_{z}(T,T) \in \R^{mN \times mN}$ denote the mean vector and covariance matrix over $T$, respectively.

\subsection{Gaussian Processes} \label{sub:GP}

This section provides a condensed background on GPs for vector-valued functions; refer to \cite{williams2006gaussian, alvarez2012kernels} for a more comprehensive introduction. Two critical assumptions are made in the definition of a GP: (i) all variables can be represented as a sample from a multivariate Gaussian; (ii) any subset of variables can also be represented as a sample from a multivariate Gaussian. Suppose that i.i.d Gaussian noise is defined by $\etabf(t) \sim \GP(\zerobf, \Sigma(t,t'))$, with $\Sigma(t,t') = \diag(\sigma_{1}^{2}, \dots, \sigma_{m}^{2})$, and the latent state is governed by $\y(t) \sim \GP(\m_{y}(t), K_{y}(t,t'))$, then the corrupted latent state, $\yh(t) = \y(t) + \etabf(t)$, is $\yh(t) \sim \GP(\m_{y}(t), K_{y}(t,t') + \Sigma(t,t'))$. By definition of a GP, the joint Gaussian of the corrupted latent state (observed at $T$) and the latent state (at test points $T_{*}$) is given by
\begin{equation} \label{eq:GP_joint}
\begin{bmatrix}
\yh(T) \\
\y(T_{*})
\end{bmatrix}
\sim \No \Biggl(
\begin{bmatrix}
\m_{y}(T) \\
\m_{y}(T_{*})
\end{bmatrix}
,
\begin{bmatrix}
K_{y}(T, T) + \Sigma & K_{y}(T, T_{*}) \\
K_{y}(T_{*}, T) & K_{y}(T_{*}, T_{*})
\end{bmatrix}
\Biggl)
\end{equation}
where $\Sigma := \Sigma(T,T)$ and $K_{y}(T, T_{*}) = K_{y}(T_{*}, T)^{\top}$.

The choice of mean and covariance functions impose a modelling bias or an opportunity to encode prior knowledge into the model. For example, the popular \emph{squared exponential} covariance function implies that the covariance between observations exponentially decays as a function of their separation in time. Refer to \cite{alvarez2012kernels} for a comparison of different covariance functions. 

Assume the existence of a dataset $\D_{\hat{y}} := (T, \hat{\mathcal{Y}})$, where $\hat{\mathcal{Y}} := \{ \hat{\y}_{j}(T)\}_{j=1}^{N_{\hat{y}}}$ is a batch of corrupted latent state trajectories. In this setting, $\y(t) \sim \GP(\m_{y}(t), K_{y}(t,t'))$ is referred to as the GP prior of the latent state. By Bayes' Theorem, the predictive distribution of the latent state at the unobserved $T_{*}$ can be directly extracted from (\ref{eq:GP_joint}). The predictive distribution is given by (\ref{eq:GP_pred})-(\ref{eq:GP_pred_cov}) and provides a distribution conditioned on the dataset. The predictive distribution is referred to as the posterior when $T_{*} = T$.
\begin{equation} \label{eq:GP_pred}
\y(T_{*}) \vert T_{*}, \D_{\hat{y}} \sim \No \big( \m_{y | \hat{y}}(T_{*}), K_{y | \hat{y}}(T_{*}, T_{*}) \big)
\end{equation}  
\vspace{-3\mylen}
\begin{multline} \label{eq:GP_pred_mean}
\m_{y | \hat{y}}(T_{*}) := \m_{y}(T_{*}) + \\ K_{y}(T_{*}, T) \big( K_{y}(T, T) + \Sigma \big)^{-1} \big( \hat{\y}(T) - \m_{y}(T) \big)
\end{multline}
\vspace{-3\mylen}
\begin{multline} \label{eq:GP_pred_cov}
K_{y | \hat{y}}(T_{*}, T_{*}) := K_{y}(T_{*}, T_{*}) - \\ K_{y}(T_{*}, T) \big( K_{y}(T, T) + \Sigma \big)^{-1} K_{y}(T, T_{*})
\end{multline}

\subsection{Latent Force Models} \label{sub:LFM}

Latent force models (LFM) were proposed in \cite{alvarez2013linear} for incorporating differential equations into GPs. The LFM framework can be applied to LTI systems \eqref{eq:LTI} by noting the existence of an analytical solution of the form  (\ref{eq:LFM}) where $G: \R_{+} \times \R_{+} \rightarrow \R^{n \times p}$ is known as the \emph{Green's function}. Since a GP is closed under linear operators \cite{williams2006gaussian}, placing a GP on $\ubf(t) \sim \GP(\m_{u}(t), K_{u}(t, t'))$ and assuming $\x(0) \sim \No(\m_{x0}, \Sigma_{x0})$ results in a GP $\x(t) \sim \GP(\m_{x}(t), K_{x}(t, t'))$ where $\m_{x}(t)$ and $K_{x}(t, t')$ are determined by (\ref{eq:LFM}). Furthermore, since $\y$ is a linear combination of $\x$ and $\ubf$, it is straightforward to determine $\y(t) \sim \GP(\m_{y}(t), K_{y}(t, t'))$.
\begin{equation} \label{eq:LFM}
\x(t) = \e^{At} \x(0) + \int_{0}^{t} \underbrace{\e^{A(t - \tilde{t})}B}_{:= G(t, \tilde{t})} \ubf( \tilde{t}) d \tilde{t}
\end{equation} 

\section{Problem Setup} \label{sub:ps}

We consider the problem of modelling an unobserved latent process from a batch of related video recordings, $\V := \{\vbf_{j}(T)\}_{j=1}^{N_{V}}$, where $T := \{t_{i}\}_{i=1}^{N}$. The pixel values of the square $d$-dimensional frames, over $T$, are denoted by $\vbf_{j}(T) = [v_{11}(t_{1}) \hspace{1mm} \dots \hspace{1mm} v_{11}(t_{N}) \hspace{1mm} \dots \hspace{1mm} v_{dd}(t_{1}) \hspace{1mm} \dots \hspace{1mm} v_{dd}(t_{N})]$. As the latent states are not measured, this is an unsupervised learning problem with dataset $\D := (T, \V)$. In this work, the latent process is assumed to be an LTI system defined by (\ref{eq:LTI}) with parameters $A \in \R^{n \times n}, B \in \R^{n \times p}, C \in \R^{m \times n}$, $D \in \R^{m \times p}$, and initial condition $\x(0) \sim \No(\m_{x0}, \Sigma_{x0})$.
\begin{subequations} \label{eq:LTI}
\begin{align}
\dot{\x}(t) &= A \x(t) + B \ubf(t) \label{eq:LTIx} \\
\y(t) &= C \x(t) + D \ubf(t) \label{eq:LTIy}
\end{align}
\end{subequations}
Several mild assumptions are made, these are: (i) $\ubf(t) \sim \GP(\m_{u}(t), K_{u}(t, t'))$ with known mean and covariance functions; (ii) the LTI parameters form a semi-contracting system (Theorem \ref{th:scon}); (iii) the latent state is corrupted by i.i.d Gaussian noise; (iv) each frame of the video is only dependent upon the corrupted latent state, at the corresponding time.

\section{Stability Enhanced Gaussian Process} \label{sec:SEGP}
This section derives the mean and covariance functions of the SEGP. This is later included within the SEGP-VAE, as in Fig. \ref{fig:GP_VAE}, to model the prior over the latent process. Theorem \ref{th:LTI_GP} derives the mean and covariance functions of a general LTI system (\ref{eq:LTI}). The matrices $A, B, C, D$ are parameters of the GP which must be trained. Theorem \ref{th:scon_param} provides an unconstrained and complete parametrisation of the $A$ matrix such that only semi-contracting LTI models can be learnt. Coupling Theorem \ref{th:LTI_GP} with the parametrisation in Theorem \ref{th:scon_param} results in the SEGP.

\begin{thm} \label{th:LTI_GP}
Given $\ubf(t) \sim \GP \big( \m_{u}(t), K_{u}(t,t') \big)$ and $\x(0) \sim \No (\m_{x0}, \Sigma_{x0})$,  the latent state governed by (\ref{eq:LTI}) has the prior distribution $\y(t) \sim \GP(\m_{y}(t), K_{y}(t,t'))$ with the following covariance and mean functions
\begin{multline} \label{eq:LTI_kernel}
K_{y}(t, t') = C \e^{At} \Sigma_{x0} \e^{At'\top} C^{\top} + D K_{u}(t, t') D^{\top} \\ 
+ \int_{0}^{t} \int_{0}^{t'} G_{C}(t, \tilde{t}) K_{u}(\tilde{t}, \hat{t})  G_{C}(t', \hat{t})^{\top} d \hat{t} d \tilde{t} \\
+ \int_{0}^{t} G_{C}(t, \tilde{t}) K_{u}(\tilde{t}, t') D^{\top} d \tilde{t} \\
+ \int_{0}^{t'} D K_{u}(t, \hat{t}) G_{C}(t', \hat{t})^{\top} d \hat{t}
\end{multline}
\begin{equation} \label{eq:LTI_mean}
\vspace{-\mylen}
\m_{y}(t) = C \e^{At} \m_{x0} + \int_{0}^{t} G_{C}(t,\tilde{t})\m_{u}(\tilde{t}) d \tilde{t} + D \m_{u}(t)
\end{equation}
where $G_{C}(\tilde{t}, \hat{t}) := C \e^{A(\tilde{t} - \hat{t})} B$ is the Green's function of (\ref{eq:LTI}) multiplied by $C$.
\end{thm}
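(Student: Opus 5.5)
The plan is to substitute the analytical solution (\ref{eq:LFM}) of (\ref{eq:LTIx}) into the output equation (\ref{eq:LTIy}). This writes $\y(t)$ as an affine combination of $\x(0)$ and $\ubf$:
\begin{equation*}
\y(t) = C\e^{At}\x(0) + \int_{0}^{t} G_{C}(t,\tilde{t})\ubf(\tilde{t})\,d\tilde{t} + D\ubf(t),
\end{equation*}
using $C G(t,\tilde{t}) = G_{C}(t,\tilde{t})$. Every term is a linear operator applied to Gaussian objects. Since GPs are closed under linear operators \cite{williams2006gaussian}, and assuming (as is implicit in the statement) that $\x(0)$ is independent of $\ubf$, the process $\y(t)$ is a GP. It therefore remains only to compute its first two moments.

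For the mean, I would take expectations term by term. Linearity of expectation, together with Fubini to exchange $\E$ and the integral over $[0,t]$ (justified because the integrand is continuous in $\tilde{t}$ on a compact interval and $\m_{u}$ is locally integrable), gives (\ref{eq:LTI_mean}) directly.

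For the covariance, I would write $\y(t) - \m_{y}(t) = C\e^{At}(\x(0)-\m_{x0}) + \int_0^t G_C(t,\tilde{t})\tilde{\ubf}(\tilde{t})d\tilde{t} + D\tilde{\ubf}(t)$ with $\tilde{\ubf} := \ubf - \m_{u}$. I would then expand $K_y(t,t') = \E[(\y(t)-\m_y(t))(\y(t')-\m_y(t'))^\top]$ into nine cross terms.
\begin{itemize}
\item The four cross terms involving $\x(0)-\m_{x0}$ together with a $\tilde{\ubf}$ term vanish by independence and zero mean.
\item The $\x(0)$--$\x(0)$ term gives $C\e^{At}\Sigma_{x0}\e^{A^\top t'}C^\top$.
\item The $D$--$D$ term gives $DK_u(t,t')D^\top$.
\item The integral--integral term, after moving $\E$ inside the double integral by Fubini, gives $\int_0^t\int_0^{t'} G_C(t,\tilde t)K_u(\tilde t,\hat t)G_C(t',\hat t)^\top d\hat t\, d\tilde t$.
\item The two mixed integral--$D$ terms give the remaining two single integrals in (\ref{eq:LTI_kernel}).
\end{itemize}
In the mixed terms, the ordering of arguments in $K_u$ must be tracked: $\E[\tilde\ubf(t)\tilde\ubf(\hat t)^\top] = K_u(t,\hat t)$.

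The main obstacle is the rigorous justification of interchanging expectation and integration, i.e.\ that the stochastic integral $\int_0^t G_C\ubf\,d\tilde t$ is well defined and Gaussian. I would handle it by interpreting the integral in the mean-square sense. This requires $K_u$ continuous, or at least $\int_0^t\int_0^t \|K_u\|$ finite. Under that assumption, Riemann-sum approximations are jointly Gaussian with moments converging to the stated integrals, and $L^2$ limits of Gaussians are Gaussian. The rest is routine bookkeeping.
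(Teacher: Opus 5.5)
Your proposal is correct and follows essentially the same route as the paper. Both arguments substitute the solution (\ref{eq:LFM}) into the output equation, invoke closure of GPs under linear operators, and compute the mean and covariance term by term, with the $\x(0)$--$\ubf$ cross terms vanishing by independence. Your nine-term expansion is just a finer grouping of the paper's $Y_{1},\dots,Y_{4}$ decomposition, and your remarks on mean-square integrals and Fubini add rigour that the paper leaves implicit.
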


\emph{Proof}: The solution to (\ref{eq:LTIx}) is given by (\ref{eq:LFM}). Subbing this into (\ref{eq:LTIy})  gives
\begin{equation} \label{eq:y_sol}
\y(t) = C \e^{At} \x(0) + \int_{0}^{t} G_{C}(t, \tilde{t}) \ubf (\tilde{t}) d \tilde{t} + D \ubf(t)
\end{equation}
The LFM framework, summarised in Section \ref{sub:LFM}, highlights there must be a GP over $\y(t)$. The mean is derived as follows
\begin{equation*}
\begin{split}
\m_{y}(t) 
&= \E [y(t)] \\
&= C \e^{At} \E[\x(0)] + \int_{0}^{t} \hspace{-2mm} G_{C}(t, \tilde{t}) \E [\ubf (\tilde{t})] d \tilde{t} + D \E [\ubf(t)]
\end{split}
\end{equation*}
Subbing in the mean of $\x(0)$ and the mean function of $\ubf(\cdot)$ results in (\ref{eq:LTI_mean}). The covariance function between $\y(t)$ and $\y(t')$ is defined by
\begin{equation*}
K_{y}(t, t') = \E \big[ \big( \y(t) - \m_{y}(t) \big) \big( \y(t') - \m_{y}(t') \big)^{\top} \big]
\end{equation*}
Subbing in (\ref{eq:LTI_mean}) and (\ref{eq:y_sol}) leads to
\begin{equation*}
K_{y}(t, t') = \E[Y_{1}Y_{3}^{\top}] + \E[Y_{1}Y_{4}^{\top}] + \E[Y_{2}Y_{3}^{\top}] + \E[Y_{2}Y_{4}^{\top}]
\end{equation*}
\begin{equation*}
\begin{split}
Y_{1} &:= C \e^{At} \big(\x(0) - \m_{x0} \big) \\
Y_{2} &:= \hspace{-1mm} \int_{0}^{t} \hspace{-2mm} G_{C}(t, \tilde{t}) \big( \ubf(\tilde{t}) - \m_{u}(\tilde{t}) \big) d \tilde{t} + D \big( \ubf(t) - \m_{u}(t) \big) \\
Y_{3} &:= C \e^{At'} \big(\x(0) - \m_{x0} \big) \\
Y_{4} &:= \hspace{-1mm} \int_{0}^{t'} \hspace{-3mm} G_{C}(t', \hat{t}) \big( \ubf(\hat{t}) - \m_{u}(\hat{t}) \big) d \hat{t} + D \big( \ubf(t') - \m_{u}(t') \big)
\end{split}
\end{equation*}
We now focus on each term individually.
\begin{equation*}
\E[Y_{1}Y_{3}^{\top}] = C \e^{At} \Sigma_{x0} \e^{At'\top} C^{\top}
\end{equation*}
\begin{multline*}
\E[Y_{1}Y_{4}^{\top}] = C \e^{At} \times \\ 
\int_{0}^{t'} \underbrace{\E \big[ \big(\x(0) - \m_{x0} \big) \big( \ubf(\hat{t}) - \m_{u}(\hat{t}) \big)^{\top} \big]}_{\text{covariance between $\x(0)$ and $\ubf(\hat{t})$}} G_{C}(t', \hat{t})^{\top} d \hat{t} \\ 
+ C \e^{At} \underbrace{\E \big[ \big(\x(0) - \m_{x0} \big) \big( \ubf(t') - \m_{u}(t') \big)^{\top} \big]}_{\text{covariance between $\x(0)$ and $\ubf(t')$}} D^{\top} = 0 
\end{multline*} 
as $\x(0)$ and $\ubf(\cdot)$ are independent. By the same logic
\begin{equation*}
\E[Y_{2}Y_{3}^{\top}] = 0
\end{equation*}
\begin{multline*}
\E[Y_{2}Y_{4}^{\top}] = \int_{0}^{t} \int_{0}^{t'} G_{C}(t, \tilde{t}) K_{u}(\tilde{t}, \hat{t}) G_{C}(t', \hat{t})^{\top} d \hat{t} d \tilde{t}  \\ 
+ \int_{0}^{t} G_{C}(t, \tilde{t}) K_{u}(\tilde{t}, t') D^{\top} d \tilde{t} \\
+ \int_{0}^{t'} K_{u}(t, \hat{t}) G_{C}(t', \hat{t})^{\top} d \hat{t} \\
+ D K_{u}(t, t') D^{\top}
\end{multline*}
Subbing each term into $K_{y}(t,t')$ results in (\ref{eq:LTI_kernel}). $\hfill \square$

In \cite{beckers2023physics} the authors presented a covariance function for (\ref{eq:LTI}) when $D=0$, $\x(0)=\zerobf$ and $\ubf(t) \sim \GP \big(\zerobf, K_{u}(t, t') \big)$. We extend this to widen the scope of applicability to account for the cases when $D \neq 0$ (e.g., analogue circuits), $\x(0) \sim \No (\m_{x0}, \Sigma_{x0})$, such as aircraft tracking, and an input signal with non-zero mean function (e.g., stochastic interest rates).

Directly applying Theorem \ref{th:LTI_GP} is problematic because non-Hurwitz $A$ matrices are permitted, which destabilise training due to unbounded mean and covariance functions. Assumption (ii) in Section \ref{sub:ps} address this issue; however, assuming the LTI parameters form a semi-contracting system requires the following well established result to hold. 
\begin{thm}[Semi-contracting LTI system] \label{th:scon}
The LTI system (\ref{eq:LTI}) is semi-contracting if and only if there exists $P \in \mathbb{S}^{n}_{+}$ which satisfies the following linear matrix inequality (LMI)
\begin{equation} \label{eq:scon}
P A + A^{\top} P \preceq 0
\end{equation}
\end{thm}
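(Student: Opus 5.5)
The plan is to fix a concrete definition of semi-contraction and reduce it to a statement about the matrix $A$ alone. Following \cite{lohmiller1998contraction, davydov2024perspectives}, we call (\ref{eq:LTI}) semi-contracting if there is a norm $\|\cdot\|_{P} := \sqrt{\x^{\top} P \x}$ with $P \in \mathbb{S}^{n}_{+}$ in which the distance between any two trajectories is non-increasing. The trajectories are driven by the same input $\ubf(\cdot)$ from arbitrary initial conditions. Let $\x_{1}(t), \x_{2}(t)$ be two such trajectories and set $\delta\x := \x_{1} - \x_{2}$. Subtracting the two copies of (\ref{eq:LTIx}) cancels the $B\ubf$ term, so $\dot{\delta\x} = A\,\delta\x$. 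The input therefore plays no role, and semi-contraction reduces to the statement that $\|\e^{At}\delta\x(0)\|_{P}$ is non-increasing in $t$ for every $\delta\x(0) \in \R^{n}$.

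For sufficiency, assume $P \in \mathbb{S}^{n}_{+}$ satisfies (\ref{eq:scon}). Define $V(t) := \delta\x(t)^{\top} P\, \delta\x(t)$ and differentiate along solutions:
\begin{equation*}
\dot V = \delta\x^{\top}(PA + A^{\top}P)\,\delta\x \leq 0 .
\end{equation*}
Hence $V$, and so $\|\delta\x\|_{P}$, is non-increasing for all initial conditions and all inputs. Because the argument is uniform in $\ubf$, it also covers the time-varying input case mentioned in Section \ref{sec:intro}.

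For necessity, suppose the distance is non-increasing in the weighted Euclidean norm induced by some $P \succ 0$. Then $V(t)$ is non-increasing for every initial difference $\delta\x(0) = \vbf \in \R^{n}$. Evaluating its derivative at $t = 0$ gives $\vbf^{\top}(PA + A^{\top}P)\vbf \leq 0$ for every $\vbf$, and since this quadratic form has a symmetric matrix, this is exactly (\ref{eq:scon}).

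The main obstacle is the necessity direction when the definition allows an arbitrary norm rather than a quadratic one. In that case I would use the fact that a non-expansive LTI flow $\e^{At}$ is uniformly bounded. This forces every eigenvalue of $A$ to satisfy $\mathrm{Re}\,\lambda \leq 0$, and every eigenvalue with $\mathrm{Re}\,\lambda = 0$ to be semisimple, since a nontrivial Jordan block would produce polynomial growth. For such a Lyapunov-stable $A$, I would construct $P$ explicitly. Write $A = S\,\mathrm{blkdiag}(J_{s}, J_{0})\,S^{-1}$, where $J_{s}$ is Hurwitz and $J_{0}$ is diagonalisable with purely imaginary spectrum. Take $P_{s} \succ 0$ solving $P_{s}J_{s} + J_{s}^{\top}P_{s} = -I$, and for the $J_{0}$ block use its real normal form, which consists of zeros and $2\times 2$ rotation generators. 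In those coordinates $J_{0}$ is skew-symmetric, so $P_{0} = I$ gives $P_{0}J_{0} + J_{0}^{\top}P_{0} = 0$. Setting $P := S^{-\top}\mathrm{blkdiag}(P_{s}, P_{0})S^{-1} \succ 0$ then satisfies (\ref{eq:scon}).

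Since this is a well-established result, I would cite the contraction literature for the precise equivalence of definitions and present only the Lyapunov computation above in detail.
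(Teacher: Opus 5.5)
Your sufficiency argument is the paper's appendix proof. The paper takes $V(\delta x)=\delta x^{\top}P\,\delta x$ along $\dot{\delta x}=A\,\delta x$, computes $\dot V=\delta x^{\top}(PA+A^{\top}P)\,\delta x\le 0$, and concludes semi-contraction. Your observation that $B\mathbf{u}$ cancels between two trajectories driven by the same input is the concrete counterpart of its virtual-displacement step.

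The difference is that the paper stops there. Although the theorem says ``if and only if'', its proof only establishes the ``if'' direction. Your necessity argument is correct and supplies the missing half.

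\begin{itemize}
\item For a fixed quadratic metric, necessity is the one-line observation that $V'(0)=v^{\top}(PA+A^{\top}P)\,v\le 0$ for every $v$.
\item For an arbitrary norm, the key step is that non-expansiveness forces $\sup_{t\ge 0}\|e^{At}\|<\infty$. This gives Lyapunov stability of $A$, and your construction $P=S^{-\top}\mathrm{blkdiag}(P_s,I)\,S^{-1}$ then yields $PA+A^{\top}P=S^{-\top}\mathrm{blkdiag}(-I,0)\,S^{-1}\preceq 0$.
\item The same boundedness argument covers the Lohmiller--Slotine setting the paper cites: a Riemannian metric $M$ with uniform bounds $mI\preceq M\preceq \bar m I$ gives $\|e^{At}\|\le\sqrt{\bar m/m}$.
\end{itemize}

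One detail to state explicitly: $S$ must be real and must already include the change of coordinates that puts $J_0$ into its real skew-symmetric normal form.

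The paper's route buys brevity. Yours proves the stated equivalence, and the proof does not depend on which definition of semi-contraction is adopted, because every version reduces to boundedness of $e^{At}$.
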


\emph{Proof}: Refer to the appendix. \hfill $\square$

The parameters of (\ref{eq:LTI}) that satisfy Theorem \ref{th:scon} are defined by the set
\begin{equation} \label{eq:Om}
\Omega := \{(A,B,C,D) \hspace{1mm} \vert \hspace{1mm} \exists P \in \mathbb{S}_{+}^{n}: P A + A^{\top} P \preceq 0 \}
\end{equation}
where $B, C, D$ are unconstrained. The following result provides a complete parametrisation of the set $\Omega$ in terms of unconstrained variables, allowing the SEGP to be implemented using parametrised layers and trained using unconstrained optimisers, like gradient descent.

\begin{figure*}[t!]
\centering
\includegraphics[width=\textwidth]{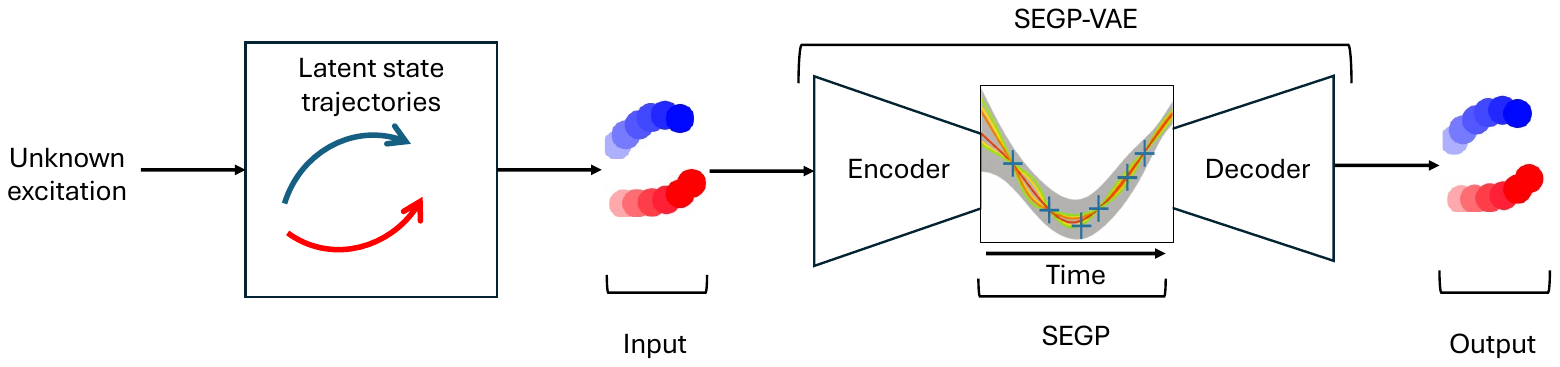}
\vspace{-4\mylen}
\caption{SEGP-VAE architecture with semi-contracting kernel.}
\label{fig:GP_VAE}
\vspace{-2\mylen}
\end{figure*}

\begin{thm} \label{th:scon_param}
$(A, B, C, D) \in \Omega$ if and only if there exists $\Vbf_{1} \in \mathbb{L}_{+}^{n}$, $\Vbf_{2} \in \mathbb{L}_{\geq 0}^{n}$ and $\Vbf_{3} \in \sk(n)$ related to $A$  by
\begin{align} \label{eq:scon_param}
A &= -\frac{1}{2} P^{-1}\Vbf_{2} \Vbf_{2}^{\top} + P^{-1} \Vbf_{3} &
P &= \Vbf_{1} \Vbf_{1}^{\top}
\end{align}
with $B \in \R^{n \times p}, C \in \R^{m \times n}$ and $D \in \R^{m \times p}$.
\end{thm}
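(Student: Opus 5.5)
The plan is to prove both directions separately, using only the definition of $\Omega$ in (\ref{eq:Om}) and standard matrix factorisations. Since $B$, $C$ and $D$ are unconstrained on both sides, they play no role, and it suffices to show that $A$ admits a $P\in\mathbb{S}_{+}^{n}$ with $PA + A^{\top}P \preceq 0$ if and only if $A$ has the form (\ref{eq:scon_param}).

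\emph{Sufficiency.} Given $\Vbf_{1}\in\mathbb{L}_{+}^{n}$, the matrix $\Vbf_{1}$ is lower triangular with positive diagonal, hence invertible. Therefore $P=\Vbf_{1}\Vbf_{1}^{\top}\in\mathbb{S}_{+}^{n}$, and $P^{-1}$ exists. Multiplying (\ref{eq:scon_param}) on the left by $P$ gives
\begin{equation*}
PA = -\tfrac{1}{2}\Vbf_{2}\Vbf_{2}^{\top} + \Vbf_{3}.
\end{equation*}
Since $\Vbf_{3}^{\top}=-\Vbf_{3}$ and $P$ is symmetric, we have $(PA)^{\top} = A^{\top}P = -\tfrac{1}{2}\Vbf_{2}\Vbf_{2}^{\top} - \Vbf_{3}$. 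Adding the two expressions yields
\begin{equation*}
PA + A^{\top}P = -\Vbf_{2}\Vbf_{2}^{\top} \preceq 0.
\end{equation*}
Hence $(A,B,C,D)\in\Omega$.

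\emph{Necessity.} Suppose $P\in\mathbb{S}_{+}^{n}$ satisfies (\ref{eq:scon}). Take $\Vbf_{1}$ to be the Cholesky factor of $P$; it is unique, lower triangular, and has positive diagonal, so $\Vbf_{1}\in\mathbb{L}_{+}^{n}$. Decompose $PA$ into its symmetric and skew-symmetric parts:
\begin{equation*}
PA = \tfrac{1}{2}(PA + A^{\top}P) + \tfrac{1}{2}(PA - A^{\top}P).
\end{equation*}
Set $\Vbf_{3} := \tfrac{1}{2}(PA - A^{\top}P)$, which lies in $\sk(n)$. The remaining matrix $Q := -(PA + A^{\top}P)$ is symmetric positive semidefinite, and it must be written as $Q = \Vbf_{2}\Vbf_{2}^{\top}$ with $\Vbf_{2}\in\mathbb{L}_{\geq 0}^{n}$. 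Once this is done, $PA = -\tfrac{1}{2}\Vbf_{2}\Vbf_{2}^{\top} + \Vbf_{3}$, and left-multiplying by $P^{-1}$ recovers (\ref{eq:scon_param}).

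\emph{Main obstacle.} The key step is the factorisation of a possibly singular PSD matrix $Q$ as $\Vbf_{2}\Vbf_{2}^{\top}$ with $\Vbf_{2}$ lower triangular and non-negative diagonal. Standard Cholesky assumes positive definiteness, so it does not apply directly.

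The first approach is a limiting argument. For $\epsilon>0$, the matrix $Q+\epsilon I$ is positive definite, so it has a Cholesky factor $L_{\epsilon}\in\mathbb{L}_{+}^{n}$. The factors are bounded, since $\|L_{\epsilon}\|_{F}^{2}=\mathrm{tr}(Q)+n\epsilon$. By compactness there is a sequence $\epsilon_{k}\to 0$ along which $L_{\epsilon_{k}}$ converges to some limit $\Vbf_{2}$. This limit is lower triangular with non-negative diagonal, because $\mathbb{L}_{\geq 0}^{n}$ is closed. By continuity, $\Vbf_{2}\Vbf_{2}^{\top}=Q$.

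An alternative is a direct column-by-column Cholesky algorithm. When a pivot is zero, positive semidefiniteness forces the rest of that column of $Q$ to vanish, so the corresponding column of $\Vbf_{2}$ can be set to zero.

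Either route completes the proof.
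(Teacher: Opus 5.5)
Your proof is correct and follows the same route as the paper. Sufficiency is the same direct computation $PA + A^{\top}P = -V_{2}V_{2}^{\top}$. Necessity uses the same construction: a triangular factorisation $-(PA+A^{\top}P) = V_{2}V_{2}^{\top}$, the skew-symmetric part of $PA$ as $V_{3}$, and the Cholesky factor of $P$ as $V_{1}$. Your $\epsilon$-regularisation and compactness argument rigorously justifies the semidefinite Cholesky factor $V_{2}\in\mathbb{L}_{\geq 0}^{n}$, which the paper only asserts. You also spell out the choices of $V_{1}$ and $V_{3}$, which the paper leaves implicit.
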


\emph{Proof (Necessity)}: If $(A,B, C, D) \in \Omega$, there exists $P \in \mathbb{S}_{+}^{n}$ such that $ PA + A^{\top}P \preceq 0$. As the left hand side is symmetric and negative semi-definite, a Cholesky decomposition must exist. That is
\begin{equation} \label{eq:chol}
PA + A^{\top}P = - \Vbf_{2} \Vbf_{2}^{\top}
\end{equation} 
This equality is satisfied if and only if
\begin{equation*}
PA = - \frac{1}{2} \Vbf_{2} \Vbf_{2}^{\top} + \Vbf_{3}
\end{equation*}
where $\Vbf_{3} \in \sk(n)$. Reconstructing $A$ results in (\ref{eq:scon_param}).

\emph{(Sufficiency)}: If there exists $\Vbf_{1}$ as stated in Theorem \ref{th:scon_param}, it immediately follows that $P$ is positive definite, symmetric and full rank. Next, constructing $A$ as in (\ref{eq:scon_param}) gives
\begin{equation*}
\begin{split}
PA + A^{\top}P 
&= -\frac{1}{2} \Vbf_{2} \Vbf_{2}^{\top} + \Vbf_{3} -\frac{1}{2} \Vbf_{2} \Vbf_{2}^{\top} + \Vbf_{3}^{\top} \\
&= -\Vbf_{2} \Vbf_{2}^{\top}
\end{split}
\end{equation*}
which is negative semi-definite, symmetric and equal to the decomposition in (\ref{eq:chol}). $\hfill \square$

As specified in Theorem \ref{th:scon_param}, the SEGP has trainable parameters $\psi := \{\Vbf_{1}, \Vbf_{2}, \Vbf_{3}, B, C, D \}$. The resulting space of $A$ matrices unconditionally satisfies Theorem \ref{th:scon}, ensuring $(A, B, C, D) \in \Omega$ throughout training. This ensures trajectories generated by the SEGP will be bounded, since any $A$ matrix satisfying (\ref{eq:scon}) has eigenvalues with non-positive real components \cite{hespanha2018linear}. Thus numerical issues due to the term $\e^{At}$ are avoided when computing the mean and covariance. Furthermore, the sets $\mathbb{L}_{\geq 0}^{n}$, $\mathbb{L}_{+}^{n}$ can be considered as unconstrained since any positive element can be expressed as the the magnitude of a scalar plus a small positive value. An analogous case holds for skew-symmetric matrices. Finally, as diagonal elements of triangular matrices correspond to their eigenvalues \cite{petersen2008matrix}, $P$ will always be full rank, ensuring the existence of it's inverse.

\section{Stability Enhanced Gaussian Process Variational Autoencoder} \label{sub:GPVAE}
This section describes the general form of the SEGP-VAE architecture and the unsupervised training objective. 

Consider the joint distribution between the video and the latent state, where $\vbf := \vbf(T)$, $\y := \y(T)$, $\vbf_{i} := \vbf(t_{i})$, and $\y_{i} := \y(t_{i})$. The joint distribution can be expressed as (\ref{eq:VAE_post1}) since the video is conditionally dependent on the latent state. By assumption (iv) in Section \ref{sub:ps}, the conditional distribution can be expressed as the product in (\ref{eq:VAE_post2}), where $P_{\psi}(\y)$ denotes the SEGP prior with trainable parameters $\psi$. 
\begin{subequations}
\begin{align}
P (\vbf, \y) &= P (\vbf \vert \y) P_{\psi}(\y) \label{eq:VAE_post1} \\
&= \prod_{i=1}^{N} P ( \vbf_{i} \vert \y_{i} ) P_{\psi}( \y ) \label{eq:VAE_post2} \\
&= \prod_{i=1}^{N} \B \big( \vbf_{i} \vert \p_{\theta}( \y_{i} ) \big) P_{\psi}(\y) \label{eq:VAE_post3}
\end{align}
\end{subequations}
Finally, in this work we make the simplifying assumption that the pixel values are black and white (i.e., $\vbf_{i} \in \{ 0,1 \}^{d^2}$ ). This allows the Bernoulli to be used to model the likelihood of the video, as in (\ref{eq:VAE_post3}). The Bernoulli is parametrised by $\p_{\theta}: \R^{m} \rightarrow [0, 1]^{d^{2}}$ which represents the probability of each pixel being white. This function is implemented by the decoder of the VAE, with trainable parameters $\theta$. Any universal function approximator may be used for the decoder providing each output is restricted to $[0,1]$. For example, an MLP with sigmoid output activations. Note that the Bernoulli could easily be replaced by the Beta distribution when modelling video with normalised RGB pixel values. In this case, the decoder would map the latent state to the shape parameters of the Beta distribution. 

Due to the Bernoulli distribution (or Beta), there exists no analytical solution for the posterior, $P (\y | \vbf)$; hence, the variational approximation of the posterior proposed in \cite{pearce2020gaussian} is reused
\begin{equation} \label{eq:Var_post}
Q_{\phi, \psi} \big(\y \vert \vbf \big) \hspace{-0.5mm} = \hspace{-0.5mm} \frac{1}{L_{\phi, \psi}(\vbf)} \prod_{i=1}^{N} \underbrace{\No \big( \y_{i} \vert \m_{\phi}(\vbf_{i}), \Sigma_{\phi}(\vbf_{i}) \big)}_{:= q_{\phi}(\y_{i} \vert \vbf_{i})} \hspace{-1mm} P_{\psi}(\y)
\end{equation}
where $L_{\phi, \psi}(\vbf)$ denotes the marginal likelihood. This approximation is simply the true posterior with only the troublesome likelihood replaced by a Gaussian. The mean and covariance functions of the likelihood, $\m_{\phi}: \{0, 1\}^{d^{2}} \rightarrow \R^{m}$, $\Sigma_{\phi}: \{0, 1\}^{d^{2}} \rightarrow \mathbb{D}_{+}^{m}$ respectively, are implemented by the VAE encoder, with trainable parameters $\phi$. Any universal function approximator may be used for the encoder providing the outputs, corresponding to the diagonal elements of $\Sigma_{\phi}$, are guaranteed to be positive. For example, mapping these outputs through an exponential function guarantees this. The SEGP-VAE architecture is depicted in Fig. \ref{fig:GP_VAE}.

We consider how to learn an accurate approximation of the posterior in Section \ref{sub:train}; for now, we just highlight how $Q_{\phi, \psi}(\y \vert \vbf)$ is computed. Consider a single video, $\vbf(T)$, from the dataset. Mapping it through the encoder results in an associated set of latent ``observations'', $\mathcal{L} := \{ \m_{\phi}(\vbf_{i})\}_{i=1}^{N}$, each with noise $\Sigma_{\phi}(\vbf_{i})$. Conditioning the GP prior on these observations yields the analytically tractable posterior (\ref{eq:Var_post}) which could be extracted from a joint Gaussian of the form (\ref{eq:GP_joint}) where $\y(T)$ and $\Sigma$ are replaced by $\m_{\phi}(\vbf) \in \R^{mN}$ and $\Sigma_{\phi}(\vbf) \in \mathbb{D}_{+}^{mN}$, as defined below
\begin{align*}
\vspace{-\mylen}
\m_{\phi}(\vbf) &= 
\setlength\arraycolsep{0pt} 
\begin{bmatrix} 
\mu_{\phi, 1}(\vbf_{1}) & \dots &  \mu_{\phi, 1}(\vbf_{N}) & \dots &  \mu_{\phi, m}(\vbf_{1}) & \dots &  \mu_{\phi, m}(\vbf_{N}) 
\end{bmatrix} \\
\Sigma_{\phi}(\vbf) &= {\small \diag \hspace{-1mm} \big(
\setlength\arraycolsep{0pt} 
\begin{matrix} 
\sigma_{\phi, 1}^{2}(\vbf_{1}) & \dots &  \sigma_{\phi, 1}^{2}(\vbf_{N}) & \dots &  \sigma_{\phi, m}^{2}(\vbf_{1}) & \dots &  \sigma_{\phi, m}^{2}(\vbf_{N}) \end{matrix}
\big)}
\vspace{-\mylen}
\end{align*}
As a result, the mean and covariance of the posterior can be directly computed from the standard GP equations (\ref{eq:GP_pred})-(\ref{eq:GP_pred_cov}).

\subsection{Training}
As this is an unsupervised learning task, the goal was to obtain an accurate reconstruction of the input video, whilst indirectly training the SEGP to accurately model the latent process. To this end, an augmented version of the \emph{evidence lower bound (ELBO)} objective was employed. The ELBO objective is defined as
\begin{multline} \label{eq:ELBO}
   \hspace{-2mm} \mathcal{L}_{ELBO}(\cdot) = \E_{Q_{\phi, \psi}} \Big[ \sum_{i=1}^{N} \log \B \big( \vbf_{i} \vert \p_{\theta}(\y_{i}) \big) \Big] \\
   + \beta \cdot \KL \big( P_{\psi} (\y) \vert \vert Q_{\phi, \psi}(\y \vert \vbf) \big)  
\end{multline} 
where the expectations are with respect to the variational posterior (\ref{eq:Var_post}). The first term is the reconstruction error which indirectly depends on the encoder, $\phi$, and SEGP, $\psi$, as $\y_{i} \sim Q_{\phi, \psi}(\y_{i} \vert \vbf_{i})$ is sampled using the reparameterisation trick \cite{kingma2013auto}. The second term is the analytically tractable Kullback–Leibler (KL) divergence, which regulates how far the variational posterior deviates from the SEGP prior. Typically, the weighting parameter $\beta$ is set to one; however, we treat this as a hyperparameter to be tuned. We augment the ELBO loss by introducing an additional regularisation term on the SEGP parameters, $r(\psi)$, as shown below. This term is also weighted by a tunable hyperparameter, $\lambda$. The choice of regularization term is task dependent and enables training of the SEGP to be biased.
\begin{equation} \label{eq:aug_ELBO}
    \mathcal{L}(\phi, \psi, \theta; \vbf) = \mathcal{L}_{ELBO}(\phi, \psi, \theta; \vbf) + \lambda r(\psi)
\end{equation}

\section{Case Study: Spiralling Particle} \label{sec:app}
Prior to this section, the general SEGP-VAE model was presented; however, as with all ML approaches, some problem specific choices must be made to achieve strong performance. This section begins by detailing the data which is subsequently used to justify certain choices in the model and training setup. Finally, the empirical results are presented. All code and weights are freely available\footnote{https://github.com/pnnl/COPIP}.

\subsection{Data} \label{sec:exp}
A dynamical system of the form (\ref{eq:LTI}), where $\x = [ r \hspace{2mm} \theta ]'$, represents the position of a particle in a plane in terms of its polar coordinates (radius, $r$, and angle from the positive x-axis, $\theta$). The following matrices were used
\begin{align*}
\hat{A} &= 
\begin{bmatrix}
-0.6 & 0 \\
0 & 0
\end{bmatrix}
&
\hat{B} &= 
\begin{bmatrix}
0 \\
1
\end{bmatrix}
& 
\hat{C} &= I
&
\hat{D} &= 0 
\end{align*}
with the external input $u(t) \sim \GP \big(\mu_{u}(t), k_{u}(t, t') \big)$ determined by $\mu_{u}(t) = 0.4 \pi t$ and $k_{u}(t, t') = \exp \big( -0.5(t-t')^{2} \big)$. Initial conditions were sampled from $\x(0) \sim \No(\m_{x0}, \sigma_{x0}^{2} I )$ where $\m_{x0} = [ 1.5 \hspace{2mm} 0]'$ and $\sigma_{x0} = 0.2$. Euler integration was used to simulate the trajectories over a $3s$ period with step size $1 \times 10^{-2}$. The trajectories were discretised with a sample period of $T_{s} = 12 \times 10^{-2}$ (N = 25) and additive measurement noise was applied to the discretised trajectories with $\Sigma = \sigma^{2} I$ and $\sigma^{2} = 1 \times 10^{-3}$. The noisy discretised trajectories were transformed to Cartesian coordinates, rescaled to pixel indices and rendered as a ball, with a radius of 2 pixels, onto a $H \times W $ binary pixel canvas, where $H = W = 40$. In total, $N_{v} = 40,000$ videos were generated by sampling $N_{v}$ external inputs and $1$ initial condition for each. Example videos are shown in Fig. \ref{fig:frame}, where the images at each time step have been overlaid to show the full trajectory.

\subsection{SEGP-VAE} \label{sec:model}

\subsubsection{Encoder}
Since we are working with images, the encoder of the SEGP-VAE was implemented by a CNN-based architecture. This consisted of three convolutional layers, with the first two having stride 2, each followed by Group Normalization and ReLU activations, and the last one having stride 1. This produced a set of latent feature maps $F \in \mathbb{R}^{K \times H \times W}$, where $K=1$ was the chosen number of key points. Due to the rotational dynamics in the data, a two-step spatial-to-polar mapping was implemented:

(i) For each channel, $k$, a \textit{Spatial Softmax} layer \cite{levine2016end} mapped the latent features, $w$, to specific image coordinates $(x_k, y_k)$. The expected coordinate in the normalized image space was computed as:
\begin{equation*}
    \mathbb{E}[v_k] = \sum_{h=1}^H \sum_{w=1}^W \frac{\exp(F_{k,h,w} / \tau)}{\sum_{i,j} \exp(F_{k,i,j} / \tau)} \cdot \text{pos}_v(w)
\end{equation*}
where $v_{k} \in \{x_{k},y_{k}\}$, $\tau=1.0$ was the chosen value of the temperature parameter controlling the sharpness of the attention, and $\text{pos}_v(w)$ denotes the normalized horizontal or vertical position. This operation is fully differentiable.

(ii) The Cartesian coordinates were mapped to polar coordinates $(r, \theta)$ via the $\text{atan2}$ function. However, the inherent $2\pi$-discontinuity of the phase component posed a significant challenge for gradient-based optimisation in temporal sequences. To resolve this, we implemented a temporal unwrapping trick. Given a sequence of raw angles $\{\theta_t\}_{t=1}^N$, we computed the unwrapped sequence $\{\theta'_t\}$ such that:
\begin{equation*}
    \theta'_t = \theta_t + 2\pi \cdot \sum_{i=2}^t \text{round} \left( \frac{\theta_{i-1} - \theta_i}{2\pi} \right)
\end{equation*}
By eliminating artificial jumps at $\pm\pi$, the latent state $\mathbf{z}_t = [r_t, \theta'_t]^\top$ evolves in a continuous space. Finally, the encoder mean was obtained by applying an affine transformation to the polar coordinates with learnable scale and bias parameters. The encoder variance was derived from learnable log-variance parameters clamped for numerical stability.

\subsubsection{Decoder}
In contrast, the decoder was implemented by a simple $2$-layer MLP. Batch norm was applied to the $500$ hidden neurons, followed by ReLU activations. This was followed by a linear output layer, parametrising the Bernoulli likelihood.

\subsubsection{SEGP}
The SEGP was implemented according to Theorem \ref{th:LTI_GP}, with the $A$ matrix parametrised according to Theorem \ref{th:scon_param}. To make the problem tractable, the matrices $B, C, D$ were assumed to be known; hence $\psi = \{V_{1}, V_{2}, V_{3}\}$. This corresponds to having information of the input-output structure of the LTI system, but no knowledge of the internal dynamics. Whilst this isn't a general requirement, we found structural identifiability to be an issue for this particular task. That is, coupled dynamics could be learnt without substantially affecting the reconstruction objective. 

\subsection{Training Setup} \label{sub:train}

\subsubsection{Objective Function}
To ensure that the learned latent dynamical system remains physically interpretable and avoids over-parameterization, we chose the the $L_1$ regularization, on the state matrix $A$. Specifically, this led to $r(\psi) = \|A\|_1$ in \eqref{eq:aug_ELBO} where $\|A\|_1 = \sum_{i,j} |A_{i,j}|$ denotes the entry-wise $L_1$ norm. The L1 regularization promotes sparsity in the state matrix and suppresses excessive cross-dimensional coupling. This acts as a complexity control mechanism, encouraging simpler and more interpretable dynamics. We also employed a linear scheduler for the weight of the $L1$ penalty term, $\lambda$. Over the course of training, this increased the weighting of this term from 0.025 to 0.3. Furthermore, we empirically found that setting $\beta = 2.5$ prevented the reconstruction error from excessively dominating the objective.

\subsubsection{Feature Scaling}
As shown in Fig. \ref{fig:learn}, the angular position can be much larger than the radius. Without intervention, this feature would naturally dominate the KL divergence. To ensure both features were treated equally during training, the KL divergence was computed in the normalised space. For the prior and posterior, this involved normalising each element of the mean function and covariance matrix by the corresponding largest radius or angular position from the training dataset.

\begin{figure}[t!]
    \centering
    \includegraphics[width=0.85\linewidth]{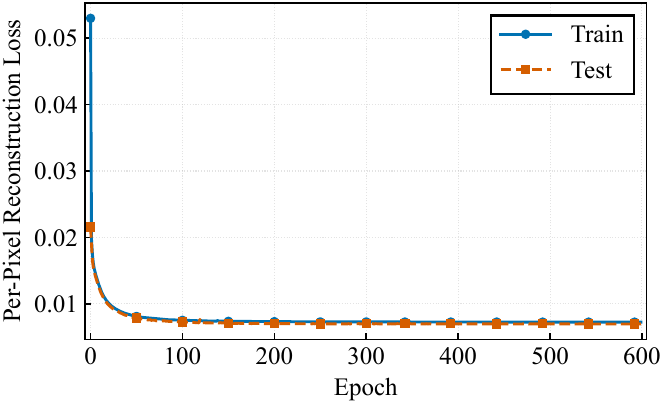}
    \vspace{-\mylen}
    \caption{Evolution of per-pixel reconstruction error during training.}
    \label{fig:elbo}
\end{figure}

\begin{figure}[t!]
    \centering
    \includegraphics[width=\linewidth]{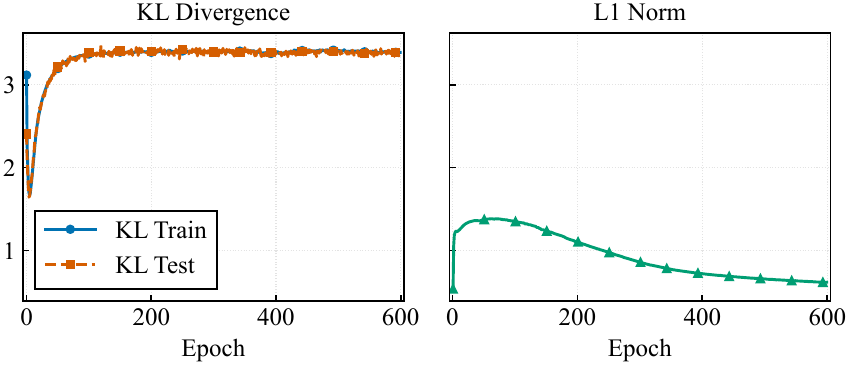}
    \vspace{-2\mylen}
    \caption{KL divergence (unweighted) and L1 norm during training.}
    \label{fig:kl}
\end{figure}

\subsubsection{Optimizer and Parameter Initialisation}
All model parameters including encoder, decoder and SEGP were trained jointly using the AdamW optimizer with a learning rate of $5 \times 10^{-3}$ and weight decay $10^{-5}$. Kaiming initialisation was used for the encoder and decoder, whilst the $A$ matrix \eqref{eq:scon_param} was initialised with $V_1 = I$, $V_3=0$, and $V_2$ sampled from a zero-mean Gaussian with standard deviation $10^{-3}$.

\subsection{Empirical Results}

\subsubsection{Training}
We report the results over 600 training epochs, with both training and testing curves monitored to evaluate effectiveness. Fig. \ref{fig:elbo} shows the ELBO and reconstruction curves decrease rapidly during early epochs and then smoothly converge to a stable equilibrium. Training and testing curves indicates good generalization and no overfitting. The KL divergence in Fig. \ref{fig:kl} initially decreases slightly and then increases, eventually stabilizing around a steady non-zero value. This shows that posterior collapse is avoided and highlights that the posterior actively contributes to the accurate video reconstruction. The L1 norm in Fig. \ref{fig:kl} increases modestly during early training before steadily decreasing. Whilst some terms in the objective function converge quickly, this regularisation term consistently varies over the entire training period. As this corresponds to consistent reduction in the loss function, it suggests this term plays an important role during training.

\subsubsection{SEGP versus Squared Exponential GP}
To see the benefit of the SEGP over a typical GP, this section compares the covariance matrix of the learnt SEGP prior with that of a standard multi-output GP, where each state was modelled by a zero-mean function and distinct, trainable squared exponential (SE) kernel. The standard GP was trained directly on the trajectory data, until convergence.

From the parameters defined in Section \ref{sec:exp}, the uncertainty of the radius only depends on the initial condition. Furthermore, the radius dynamics are contracting which implies the uncertainty should exponentially disappear. This is reflected in the SEGP covariance of the radius (Fig. \ref{fig:cov}); however, it can be more clearly seen in Fig. \ref{fig:learn} since the uncertainty of the radius is negligible compared to the angle. 

On the other hand, the uncertainty of the angular position is independent of the initial conditions, but integrates the uncertainty of the input over time. Fig. \ref{fig:cov} shows that the learnt SEGP covariance of the angle represents this whilst the covariance of the standard GP is restricted to being an exponentially decaying function of $|t-t'|$. 

Finally, both covariance matrices show the radius and angular position are uncorrelated; however, this was enforced by the standard GP and learnt by the more flexible SEGP. 

\begin{figure}[t!]
    \centering
    \includegraphics[width=\linewidth]{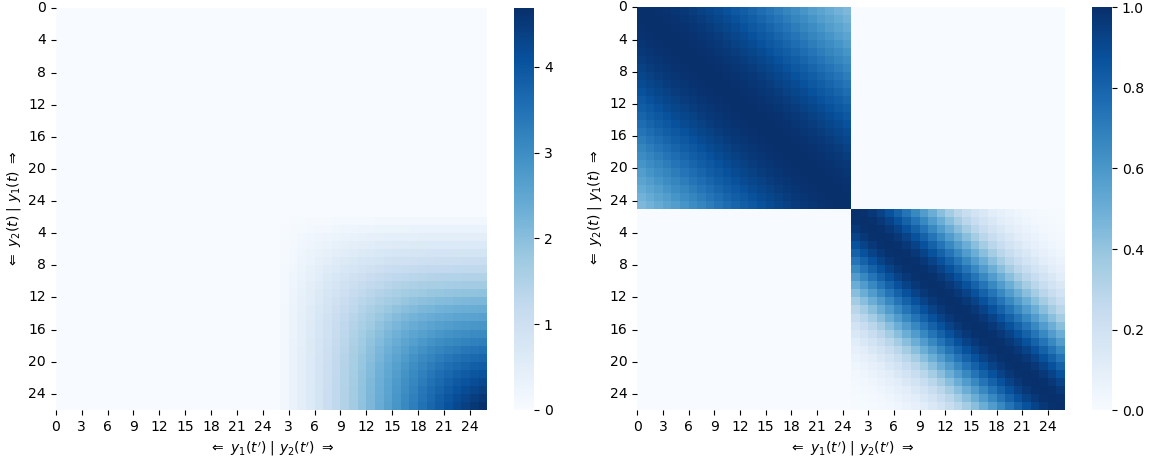}
    \caption{Learnt covariance matrices of SEGP prior (left) \& SE kernel (right). Top left block is the covariance of the radius and bottom right block is the covariance of the angular position. Top right and bottom left is the covariance across dimensions.}
    \label{fig:cov}
\end{figure}

\subsubsection{Posterior versus Prior}
Fig. \ref{fig:postvprior} compares the average absolute error between the posterior (and prior) mean prediction and the latent trajectories from the test set. Furthermore it compares the average variance of the posterior and prior across the test set. Only the angular position component is considered since there is such little variance in the radius component of the trajectory data. 

Since the average absolute error and variance of the posterior (conditioned on the videos) is significantly reduced compared to the prior, this highlights that the SEGP-VAE enocoder has learnt a representation which translates the observed evolution of the particle in the video into an appropriate Bayesian update. The low error and variance of the posterior are also highlighted in Fig. \ref{fig:learn} where the posterior mean aligns tightly with the ground truth and the uncertainty bands are narrow. 
    
\subsubsection{Prediction}
In Fig. \ref{fig:frame} the reconstructed frames accurately reproduce the rotational motion present in the ground truth sequences. More importantly, in Fig. \ref{fig:frame} and Fig. \ref{fig:learn} the posterior mean closely matches the true latent trajectories with narrow uncertainty bands. This is more rigorously supported by Fig. \ref{fig:postvprior} showing the average absolute error between the SEGP posterior mean and the latent trajectories in the test set. Furthermore, the error in the spectral norm between the learnt and ground truth $A$ matrices was only $\|\hat{A}-A\|_{2} = 0.016$; hence the worst case directional distortion of the learnt latent process was minimal.

\begin{figure}[t!]
    \centering
    \includegraphics[width=\linewidth]{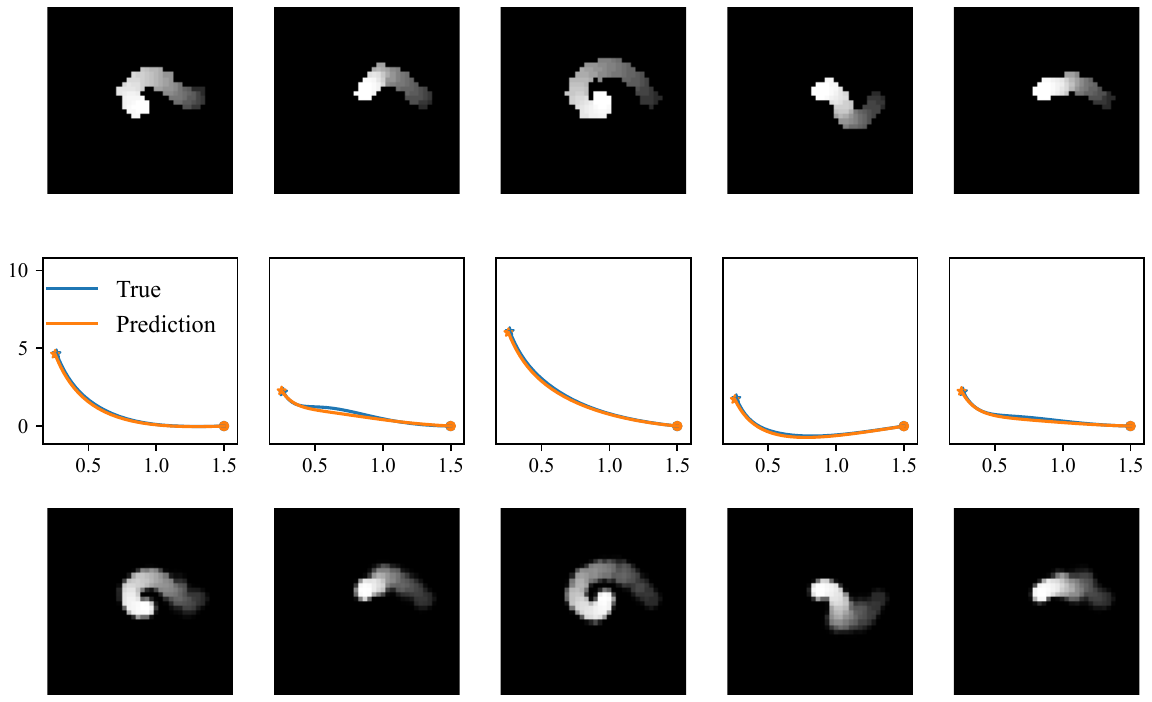}
    \caption{Randomly sampled videos from test set (top); corresponding latent trajectories (radius along x-axis, angular position along y-axis) and the posterior mean prediction (middle); reconstructed video (bottom)).}
    \label{fig:frame}
\end{figure}

\section{Conclusion} \label{sec:con}
A novel Stability Enhanced Gaussian Process (SEGP) was proposed with mean and covariance functions derived from the definition of a semi-contracting LTI system. This provided a probabilistic and interpretable physical modelling tool whilst avoiding numerical issues caused by the presence of a non-Hurwitz state matrix. The SEGP was used within a VAE framework for training the LTI system to model the latent process which generated the video data. Whilst the latent process in the case study was a linear system, we expect our approach to also be suitable for nonlinear systems based on Koopman theory. The case study highlighted: (i) the superior flexibility of the SEGP over a standard GP kernel; (ii) that strong performance required significant application specific design of the encoder and augmentation of the standard ELBO loss. Whilst not a theoretical requirement of the approach, prior knowledge of the underlying processes input–output structure was needed in practice. Overcoming this limitation will be the subject of future work.

\addtolength{\textheight}{-0.9cm}

\appendix

\section{Proof of Theorem \ref{th:scon}}
The fundamental result of contraction analysis is presented in \cite[Section 3]{lohmiller1998contraction}. For (\ref{eq:LTI}) to be semi-contracting, the distance between any two trajectories, in a Riemann space, must never increase. This distance is denoted by $V(\delta x)$, where $\delta x$ denotes the virtual displacement between two trajectories and $P \in \mathcal{S}_{+}^{n}$. The virtual displacement is governed by $\dot{\delta x}$. 
\begin{align} \label{eq:dx}
V(\delta x) &= \delta x^{\top} P \delta x &
\dot{\delta x} &= A \delta x
\end{align}

\emph{Proof of Theorem \ref{th:scon}}: Consider $\dot{V}(\delta x)$
\begin{equation*}
\begin{split}
\dot{V}(\delta x) &= 2 \delta x^{\top} P \dot{\delta x} \\
&= 2 \delta x^{\top} P A \delta x \\
&= \delta x^{\top} ( P A + A^{\top} P ) \delta x \\
\end{split}
\end{equation*}
If there exists a matrix $P \in \mathcal{S}_{+}^{n}$ satisfying \eqref{eq:scon}, then $\dot{V}(\delta x) \leq 0 \hspace{1mm} \forall \hspace{1mm} \delta x$. Hence, the system is semi-contracting. $\hfill \square$

\begin{figure}[t!]
    \centering
    \includegraphics[width=\linewidth]{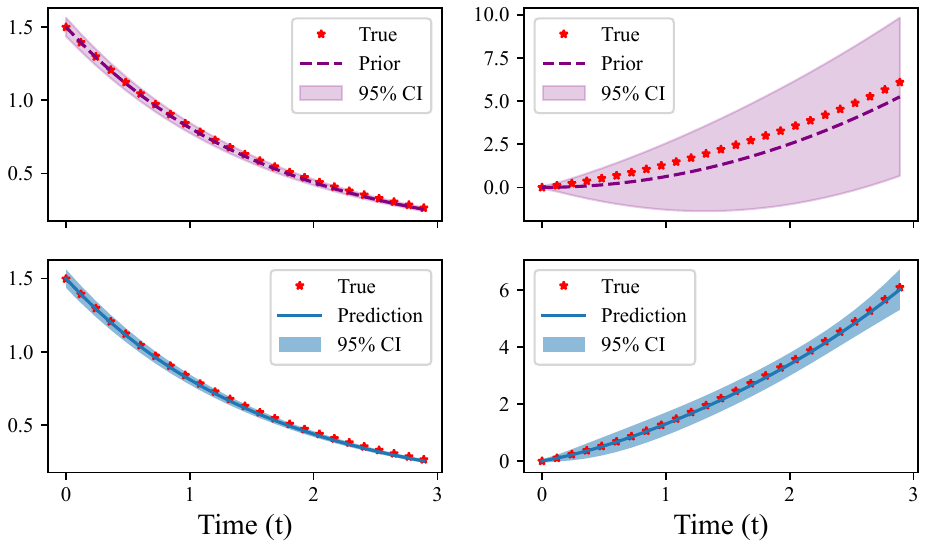}
    \vspace{-2\mylen}
    \caption{Randomly sampled latent trajectory from test set, plotted against the learnt SEGP prior (top) and learnt SEGP posterior (bottom). Radius plotted on left and angular position on right.}
    \label{fig:learn}
\end{figure}

\begin{figure}[t!]
    \centering
    \includegraphics[width=0.9\linewidth]{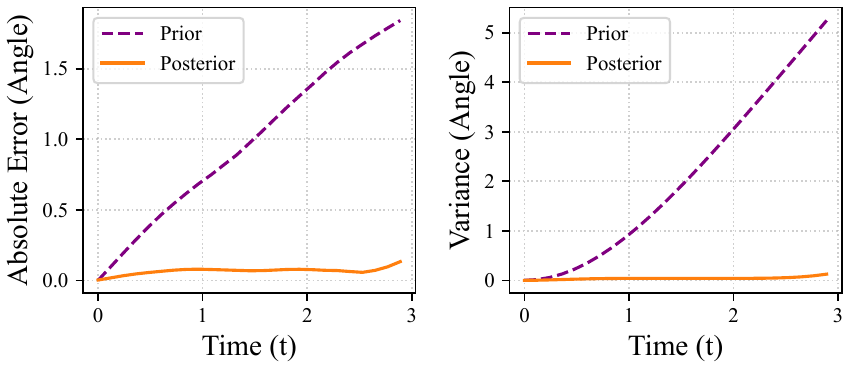}
    \vspace{-1.5\mylen}
    \caption{Absolute error between the learnt SEGP prior (posterior) mean and the angular position trajectory data, averaged over the test set (left). The variance of the learnt SEGP prior (posterior) for the angular position, averaged over the test set (right).} 
    \label{fig:postvprior}
\end{figure}

\bibliography{Bibliography.bib}
\bibliographystyle{IEEEtran}

\end{document}